\newtheorem{prop}{Proposition}
\newtheorem{lemma}{Lemma}
\newcommand{\E}{\mathbb{E}}
\newcommand{\Exy}{\E_{(x,y)\sim P}}
\newcommand{\Ex}{\E_{x\sim P}}
\newcommand{\KL}{\text{KL}}
\newcommand{\rich}[1]{\textcolor{magenta}{Rich: #1}}
\title{Understanding the Limitations of Conditional Generative Models}
\author{
  Ethan Fetaya\thanks{equal contribution}\quad\quad J\"orn-Henrik Jacobsen$^*$\quad\quad Will Grathwohl\quad\quad Richard Zemel  \\
  Vector Institute and University of Toronto \\
  \{ethanf, jjacobs,wgrathwohl, zemel\}@cs.toronto.edu
 }
\begin{document}

\maketitle

\begin{abstract}

Class-conditional generative models hold promise to overcome the shortcomings of their discriminative counterparts. They are a natural choice to solve discriminative tasks in a robust manner as they jointly optimize for predictive performance and accurate modeling of the input distribution. In this work, we investigate robust classification with likelihood-based generative models from a theoretical and practical perspective to investigate if they can deliver on their promises. Our analysis focuses on a spectrum of robustness properties: (1) Detection of worst-case outliers in the form of adversarial examples; (2) Detection of average-case outliers in the form of ambiguous inputs and (3) Detection of incorrectly labeled in-distribution inputs. 

Our theoretical result reveals that it is impossible to guarantee detectability of adversarially-perturbed inputs even for near-optimal generative classifiers. Experimentally, we find that while we are able to train robust models for MNIST, robustness completely breaks down on CIFAR10. We relate this failure to various undesirable model properties that can be traced to the maximum likelihood training objective. Despite being a common choice in the literature, our results indicate that likelihood-based conditional generative models may are surprisingly ineffective for robust classification.


\end{abstract}

\section{Introduction}
\newcommand{\will}[1]{\textcolor{red}{Will: #1}}

\begin{figure}[h]
    \centering
    \includegraphics[width=1.\textwidth]{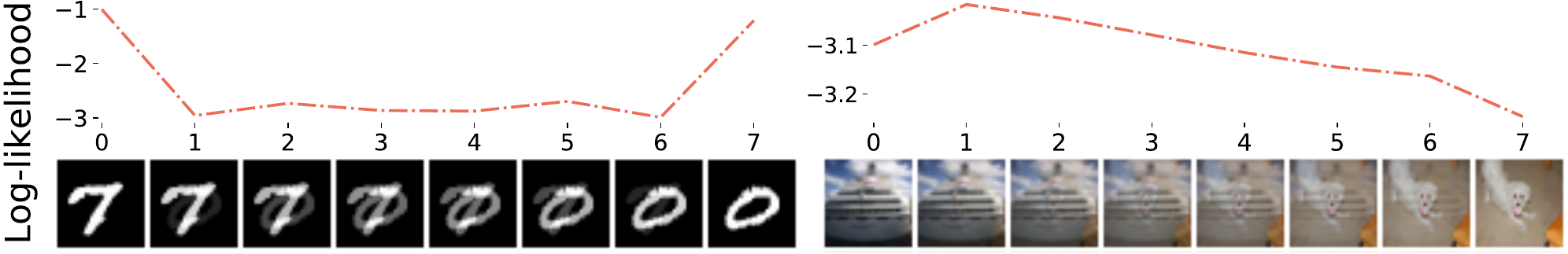}
    \caption{Linear interpolations of inputs and respective outputs of a conditional generative model between two MNIST and CIFAR10 images from different classes. X-axis is interpolation steps and Y-axis negative log-likelihood in bits/dim \textit{(higher is more likely under model)}. MNIST interpolated images are far less likely than real images, whereas for CIFAR10 the opposite is observed, leading to high confidence classification of ambiguous out-of-distribution images.}\label{fig:interp}
\end{figure}

Conditional generative models have recently shown promise to overcome many limitations of their discriminative counterparts. They have been shown to be robust against adversarial attacks \citep{Schott2018a,ghosh2019resisting,PixelDefend,li2018generative,frosst2018darccc}, to enable robust classification in the presence of outliers \citep{NalisnickMTGL19} and to achieve promising results in semi-supervised learning \citep{kingma2014semi,salimans2016improved}. Motivated by these success stories, we study the properties of conditional generative models in more detail.

Unlike discriminative models, which can ignore class-irrelevant information, conditional generative models cannot discard any information in the input, potentially making it harder to fool them. Further, jointly modeling the input and target distribution should make it easy to detect out-of-distribution inputs. These traits lend hope to the belief that good class-conditional generative models can overcome important problems faced by discriminative models.

In this work, we analyze conditional generative models by assessing them on a spectrum of robustness tasks. (1) Detection of worst-case outliers in the form of adversarial examples; (2) Detection of average-case outliers in the form of ambiguous inputs and (3) Detection of incorrectly labeled in-distribution inputs. If a generative classifier is able to perform well on all of these, it will naturally be robust to noisy, ambiguous or adversarially perturbed inputs. 

Outlier detection in the above settings is substantially different from general out-of-distribution (OOD) detection, where the goal is to use unconditional generative models to detect any OOD input. For the general case, likelihood has been shown to be a poor detector of OOD samples. In fact, often higher likelihood is assigned to OOD data than to the training data itself \citep{nalisnick2018do}. However, class-conditional likelihood necessarily needs to decrease towards the decision-boundary for the classifier to work well. Thus, if the class-conditional generative model has high accuracy, rejection of outliers from the wrong class via likelihood may be possible.


Our contributions are:

\textbf{Provable Robustness} \, We answer: \textit{Can we theoretically guarantee that a strong conditional generative model can robustly detect adversarially attacked inputs?} In section \ref{sec:theory} we show that even a near-perfect conditional generative model cannot be guaranteed to reject  adversarially perturbed inputs with high probability.

\textbf{Assessing the Likelihood Objective} \, We discuss the basis to empirically analyze robustness in practice. We identify several fundamental issues with the maximum likelihood objective typically used to train conditional generative models and discuss whether it is appropriate for detecting out-of-distribution inputs. 



\textbf{Understanding Conflicting Results} \, We explore various properties of our trained conditional generative models and how they relate to fact that the model is robust on MNIST but not on CIFAR10. We further propose a new dataset where we combine MNIST images with CIFAR background, making the generative task as hard as CIFAR while keeping the discriminative task as easy as MNIST, and investigate how it affects robustness.

\section{Confident Mistakes Cannot be Ruled Out}\label{sec:theory}
The most challenging task in robust classification is accurately classifying or detecting adversarial attacks; inputs which have been maliciously perturbed to fool the classifier. In this section we discuss the possibility of guaranteeing robustness to adversarial attacks via conditional generative models. 

\textbf{Detectability of Adversarial Examples} \, In the adversarial spheres work \citep{Gilmer2018} the authors showed that a model can be fooled without changing the ground-truth probability of the attacked datapoint. This was claimed to show that adversarial examples can lie on the data manifold and therefore cannot be detected. 
While \citep{Gilmer2018} is an important work for understanding adversarial attacks, it has several limitations with regard to conditional generative models. First, just because the attack does not change the ground-truth likelihood, this does not mean the model can not detect the attack. Since the adversary needs to move the input to a location where the model is incorrect, the question arises: \textit{what kind of mistake will the model make?} If the model 
assigns low likelihood to the correct class without 
increasing the likelihood of the other classes then the adversarial attack will be detected, as the joint likelihood over all classes moves below the threshold of typical inputs. Second, on the adversarial spheres dataset \citep{Gilmer2018} the class supports do not overlap.  If we were to train a model of the joint density $p_\theta(x,y)$ (which does not have 100\% classification accuracy) then the KL divergence $\textit{KL}\left(p(x,y)||p_\theta(x,y)\right)$, where $p(x,y)$ is the data density, is infinite due to division by zero (note that $\textit{KL}\left(p_\theta(x,y)||p(x,y)\right)$ is what is minimized with maximum likelihood). This poses the question, whether small $\textit{KL}\left(p(x,y)||p_\theta(x,y)\right)$ or small Shannon-Jensen divergence is sufficient to guarantee robustness. In the following, we show that this condition is insufficient.\\

\textbf{Why no Robustness Guarantee can be Given} \, The intuition why conditional generative models should be robust is as follows: If we have a robust discriminative model then the set of confident mistakes, i.e. where the adversarial attacks must reside, has low \emph{probability} but might be large in volume. For a robust conditional generative model, the set of undetectable adversarial attacks, i.e. high-density high-confidence mistakes, has to be small in volume. Since the adversary has to be $\Delta$ close to this small volume set, the $\Delta$ area around this small volume set should still be small. This is where the idea breaks down due to the curse of dimensionality. Expanding a set by a small radius can lead to a much larger one even with smoothness assumptions. Based on this insight we build an analytic counter-example for which we can prove that even if
\begin{equation}\label{eq:KL_epsilon}
\textit{KL}\left(q||p\right)<\epsilon\quad \textit{KL}\left(p||q\right)<\epsilon
\end{equation}
where $p=p(x,y)$ is the data distribution, and $q=q(x,y)$ is the model, we can with probability $\approx0.5$  take a correctly classified input sampled from $p$, and perturb it by at most $\Delta$ to create an adversarial example that is classified incorrectly and is not  detectable. 

\begin{figure}
  \centering
  \vspace{-7mm}
      \includegraphics[width=1.\linewidth]{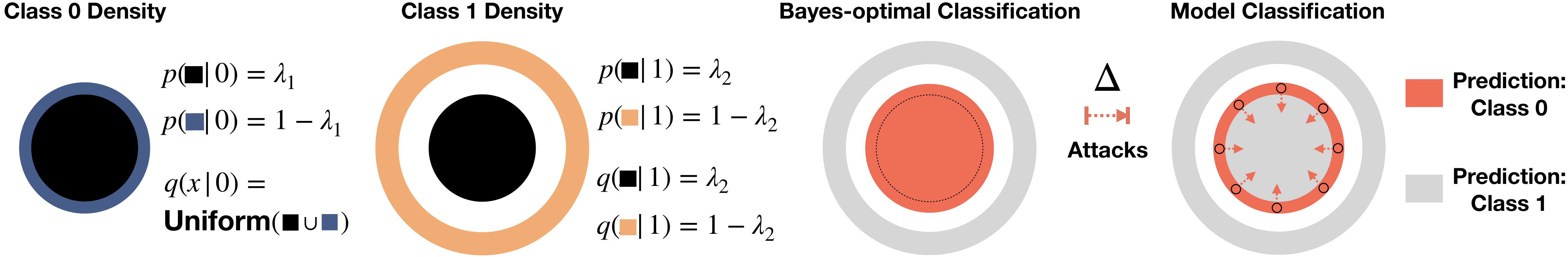}
  \caption{Counter example construction. Shown on the left are the two class data densities, on the right the Bayes-optimal classifier for this problem (assuming $\lambda_1 > \lambda_2$) and the model we consider. Despite being almost optimal, the model can be fooled with undetectable adversarial examples (red arrows). Detailed description in section \ref{sec:theory}.}\label{fig:counterexample}
\end{figure}

We note that the probability in every ball with radius $\Delta$ can be made as small as desired, excluding degenerate cases.
We also assume that the Bayes optimal classifier is confident and is not affected by the attack, i.e. \emph{we do not change the underlying class but wrongfully flip the decision of the classifier}.

The counter-example goes as follows: Let $U(a,b)$ be the density of a  uniform distribution on an annulus in dimension $d$,  $\{x\in \mathbb{R}^d:\,a\leq||x||\leq b\}$ then the data conditional distribution is
\begin{eqnarray}
 p(x|0) =&\lambda_1U(0,1)+(1-\lambda_1)U(1,1+\Delta) \quad &0\leq \lambda_1\leq 1\\
 p(x|1) =&\lambda_2U(0,1)+(1-\lambda_2)U(2,3)\quad &0\leq \lambda_2\leq 1\nonumber 
 \end{eqnarray}
 with $p(y=0)=p(y=1)=1/2$. Both classes are a mixture of two distributions, uniform on the unit sphere and uniform on an annulus, as shown in Fig. \ref{fig:counterexample}. The model distribution is the following:
 \begin{eqnarray}
  q(x|0) =& U(0,1+\Delta)\\
 q(x|1) =& \lambda_2U(0,1)+(1-\lambda_2)U(2,3)\nonumber
\end{eqnarray}
i.e. for $y=1$ the model is perfect, while for $y=0$ we replace the mixture with uniform distribution over the whole domain. If $\lambda_1 \gg\lambda_2$ then points in the sphere with radius 1 should be classified as class $y=0$ with high likelihood. If $\lambda_2>>\frac{1}{(1+\Delta)^d}$ then the model classifies points in the unit sphere incorrectly with high likelihood. Finally if $1>>\lambda_1$ then almost half the data points will fall in the annulus between $1$ and $1+\Delta$ and can be adversarially attacked with distance lesser or equal to $\Delta$ by moving them into the unit sphere as seen in Fig. \ref{fig:counterexample}. We also note that these attacks cannot be detected as the model likelihood only increases. In high dimensions, almost all the volume of a sphere is in the outer shell, and this can be used to show that in high enough dimensions we can get the condition in Eq. \ref{eq:KL_epsilon} for any value of $\epsilon$ and $\Delta$ (and also the confidence of the mistakes $\delta$). The detailed proof is in the supplementary material.\\

This counter-example shows that even under very strong conditions, a good conditional generative model can be attacked. Therefore no theoretical guarantees can be given in the general case for these models. Our construction, however, does not depend on the learning model but on the data geometry. This raises interesting questions concerning the source of the susceptibility to attacks: Is it 
the model
 or an inherent issue with the data?
\section{The Maximum Likelihood Objective}\label{sec:MLE}
\subsection{The Difficulty in Training Conditional Generative Models}
Most recent publications on likelihood-based generative models primarily focus on quantitative results of unconditonal density estimation~\citep{van2016conditional,kingma2018glow,salimans2017pixelcnn++,kingma2016improved,papamakarios2017masked}. For conditional density estimation, either only qualitative samples are shown~\citep{kingma2018glow}, or it is reported that conditional density estimation does not lead to better likelihoods than unconditional density estimation. In fact, it has been reported that conditional density estimation can lead to slightly worse data likelihoods~\citep{papamakarios2017masked,salimans2017pixelcnn++}, which is surprising at first, as extra bits of important information are provided to the model.

\textbf{Explaining Likelihood Behaviour} \, One way to understand this seemingly contradictory relationship is to consider the objective we use to train our models.
When we train a generative model with maximum likelihood (either exactly or through a lower bound) we are minimizing the empirical approximation of $\E_{x,y\sim P} \left[-\log(P_\theta(x,y))\right]$ which is equivalent to minimizing $ \textit{KL} (P(x,y)||P_\theta(x,y))$. Consider now an image $x$ with a discrete label $y$, which we are trying to model using $P_\theta(x,y)$. The negative log-likelihood (NLL) objective is:
\begin{eqnarray}\label{eq:NNL}
     \Exy[-\log(P_\theta(x,y))] =& \Ex[-\log(P_\theta(x))] \nonumber \\
    +& \Ex[\mathbb{E}_y[-\log(P_\theta(y|x))|x]] 
\end{eqnarray}
If we model $P_\theta(y|x)$ with a uniform distribution over classes, then the second term has a value of $\log(C)$ where $C$ is the number of classes. This value is negligible compared to the first term $\Ex[-\log(P_\theta(x))]$ and therefore the ``penalty" for completely ignoring class information is negligible. So it is not surprising that models with strong generative abilities can have limited discriminative power. What makes matters even worse is that the penalty for confident mis-classification can be unbounded. This may also explain why the conditional ELBO is comparable to the unconditional ELBO~\citep{papamakarios2017masked}. Another way this can be seen is by thinking of the likelihood as the best lossless compression. When trying to encode an image, the benefit of the label is at most $\log(C)$ bits which is small compared to the whole image. While these few bits are important for users, from a likelihood perspective the difference between the correct $p(y|x)$ and a uniform distribution is negligible. This means that when naively training a class-conditional generative model by minimizing $\Exy[-\log(P_\theta(x|y))]$, typically discriminative performance as a classifier is very poor.

\subsection{Outlier Detection}\label{sec:likelihood_limit}

Another issue arises when models trained with maximum likelihood are used to detect outliers.
The main issue is that maximum likelihood, which is equivalent to minimizing $ \textit{KL} (P(x,y)||P_\theta(x,y))$, is known to have a ``mode-covering'' behavior. It has been shown recently in \citep{nalisnick2018do} that generative models, trained using maximum likelihood, can be quite poor at detecting out-of-distribution example. In fact it has been shown that these models can give higher likelihood values, on average, to datasets different from the test dataset that corresponds to the training data.  Intuitivily one can still hope that a high accuracy conditional generative model would recognize an input conditioned on the wrong class as an outlier, as it was successfully trained to separate these classes. In section \ref{sec:outlier} we show this is not the case in practice.

While \citep{nalisnick2018do} focuses its analysis into dataset variance, we propose this is an inherit issue with the likelihood objective. If it is correct then the way conditional generative models are trained is at odds with their desired behaviour. If this is the case, then useful conditional generative model will require a fundamentally different approach.

\section{Experiments} \label{sec:exp}
We now present a set of experiments designed to test the robustness of conditional generative models. All experiments were performed with a flow model where the likelihood can be computed in closed form as the probability of the latent space embedding (the prior) and a Jacobian correction term; see Sec \ref{sec:flow_background} for a detailed explanation. Given that we can compute $p(x,y)$ for each class, we can easily compute $p(y|x)$ and classify accordingly. Besides allowing closed-form likelihood computation, the flexibility in choosing the prior distribution was important to conduct various experiments. In our work we used a version of the GLOW model; details of the models and training is in the supplementary material sec. \ref{sec:App_Imp}.  We note that the results are not unique to flow models, and we verified that similar phenomenon can be seen when training with the PixelCNN++ autoregressive model \citep{Salimans2017PixeCNN} in sec. \ref{sec:pixelcnn}.

\subsection{Training Conditional Generative Models}
\label{sec:training}
Here we investigate the ability to train a conditional generative model with good likelihood and accuracy simultaneously. Usually in flow models the prior distribution in latent space $z$ is Gaussian. For classification we used aclass-conditional mixture of 10 Gaussians $p(z|y)=\mathcal{N}(\mu_y,\sigma^2_y)$
We compare three settings: 1) A class-conditional mixture of 10 Gaussians as the prior (Base). 2) A class-conditional mixture of 10 Gaussians trained with an additional classification loss term (Reweighted). 3) Our proposed conditional split prior (Split) described in sec. \ref{sec:pad} in the supplementary material. Results can be found in table \ref{tab:tradeoff_disc_gen}.

\begin{table}[h]
\begin{center}
\begin{tabular}{clll|clll}  \label{table:priors}
     {\textbf{MNIST}} &  {Base} & {Reweight}  &  {Split}&  {\textbf{CIFAR10}}&  {Base} & {Reweight}  &  {Split} \\ \midrule   
    {\textit{\% Acc}}  & {96.9}  & {99.0} & {99.3}& {\textit{\% Acc}}  & {56.8}  & {83.2} &  {84.0} \\ 
    {\textit{bits/dim}}\footnotemark  & {0.95}  & {1.10} & {1.00}  & {\textit{bits/dim}}  & {3.47}  & {3.54} & {3.53} \\
\end{tabular}

\caption{Comparison between different models.}
\label{tab:tradeoff_disc_gen}
\end{center}
\vspace{-3mm}
\end{table}
\addtocounter{footnote}{-1} 
\stepcounter{footnote}\footnotetext{As we model zero padded datasets as described in section \ref{sec:pad}, these numbers are not exactly comparable with the literature.}

As we can see, especially on CIFAR10, pushing up the accuracy to values that are still far from state-of-the-art already results in non-negligible deterioration to the likelihood values. This exemplifies how obtaining strong classification accuracy without harming likelihood estimation is still a challenging problem. We note that while the difference between the split prior and re-weighted version is not huge, the split prior achieves better NLL and better accuracy in both experiments. We experimented with various other methods to improve training with limited success, see sec. \ref{sec:negative} in the supplementary material for furture information.

\subsection{Negligible Impact of Class Membership on Likelihood}\label{sec:outlier}
\begin{figure}[h]
    \vspace{-8mm}
    \centering
    \subfloat[MNIST]{{\includegraphics[width=.47\textwidth]{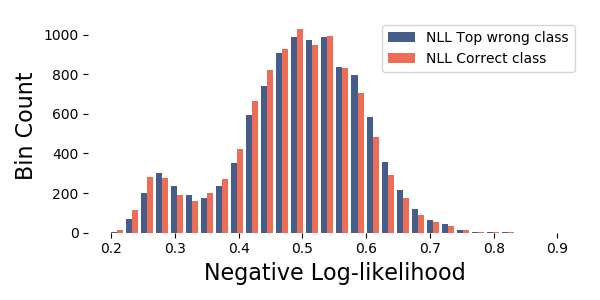}}}%
    \qquad
    \subfloat[CIFAR10]{{\includegraphics[width=.47\textwidth]{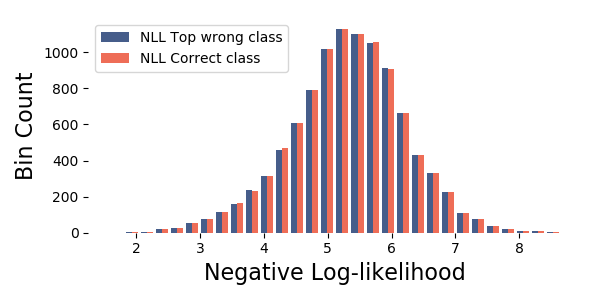}}}%
    \caption{NLL for images conditioned on the correct class vs the highest probability wrong class.}%
    \label{fig:NLL_wrong}%
    \vspace{-2mm}
\end{figure}

Next we show that even conditional generative models which are strong classifiers do not see images with the corrupted labels as outliers. To understand this phenomenon we first note that if we want the correct class to have a probability of at least $1-\delta$ then it is enough for the corresponding logit to be larger than all the others by $\log(C)+\log\left(\frac{1-\delta}{\delta}\right)$ where $C$ is the number of classes. For $C=10$ and $\delta=1e-5$ this is about 6, which is negligible relative to the likelihood of the image, which is in the scale of thousands. This means that even for a strong conditional generative model which confidently predicts the correct label, the pair $\{x_i,y_w\neq y_i\}$ (where $w$ is the leading incorrect class) cannot be detected as an outlier according to the joint distribution, as the gap $\log(p(x_i|y_i))-\log(p(x_i|y_w))$ is much smaller than the variation in likelihood values. In Fig. \ref{fig:NLL_wrong} we show this by plotting the histograms of the likelihood conditioned both on the correct class and on the most likely wrong class over the test set. In other words, in order for $\log(p(x_i|y_w))$ to be considered an outlier the prediction needs to be extremely confident, much more than we expect it to be, considering test classification error.

\subsection{Adversarial attacks as Worst Case Analysis}
We first evaluate the ability of conditional generative models to detect standard attacks, and then try to detect attacks designed to fool the detector (likelihood function). We evalulate both the gradient based Carlini-Wagner $L_2$ attack (CW-$L_2$) \citep{CarliniW017} and the gradient free boundary attack \citep{brendel2018decisionbased}. Results are shown in table \ref{tab:robustness_results} on the left. It is interesting to observe the disparity between the CW-$L_2$ attack, which is easily detectable, and the boundary attack which is much harder to detect.

\begin{table*}[h!]
\begin{center}
\begin{tabularx}{\textwidth}{lllll} 
    {\textbf{Attacking}}  &{\textbf{Classification}}& {}&{\textbf{Classification and Detection}}  & {}   \\   \midrule 
    {\textbf{MNIST}} & {Reweight}  & {Split} & {Reweight}  & {Split} \\ \midrule 
    {$CW-L_2$} & {0\% \textcolor{gray}{(100\%)}}  & {1\% \textcolor{gray}{(100\%)}} & {17\% \textcolor{gray}{(100\%)}}  & {14\% \textcolor{gray}{(100\%)}}  \\
    {Boundary attack}  & {43\% \textcolor{gray}{(82\%)}}  & {36\% \textcolor{gray}{(80\%)}} & {0\% \textcolor{gray}{(0\%)}}  & {0\% \textcolor{gray}{(0\%)}} \vspace{1mm}\\ 
    \midrule
   {\textbf{CIFAR10}} & {}  & {} & {}  & {} \\ \midrule
    {$CW-L_2$}& {0\% \textcolor{gray}{(97\%)}}  & {0\% \textcolor{gray}{(0\%)}} & {6\% \textcolor{gray}{(99\%)}}  & {3\% \textcolor{gray}{(100\%)}}\\
    {Boundary attack} & {67\% \textcolor{gray}{(100\%)}}  & {72\% \textcolor{gray}{(100\%)}}& {100\% \textcolor{gray}{(100\%)}}  & {100\% \textcolor{gray}{(100\%)}} \\   
\end{tabularx}
\caption{Comparison of attack detection. Percentage of successful and undetected attacks within $L_2$-distance of $\epsilon = 1.5$ for MNIST and $\epsilon = 33/255$ for CIFAR10 for proposed models. Number in parentheses is percentage of  attacks that successfully fool the classifier, both detected and undetected.}
\label{tab:robustness_results}
\end{center}
\end{table*}
Next we modify our attacks to try to fool the detector as well. With the CW-$L_2$ attack we follow the modification suggested in \citep{carlini2017adversarial} and add an extra loss term 
$\ell_{det}(x')=\max\{0,-\log(p(x'))-T\}$
where $T$ is the detection threshold. 
For the boundary attack we turn the $C$-way classification into a $C+1$-way classification by adding another class which is ``non-image'' and classify any image above the detection threshold as such. We then use a targeted attack to try to fool the network to classify the image into a specific original class. This simple modification to the boundary attack will typically fail because it cannot initialize. The standard attack starts from a random image and all random images are easily detected as ``non-image'' and therefore do not have the right target class. To address this we start from a randomly chosen image from the target class, ensuring the original image is detected as a real image from the desired class. 

From table \ref{tab:robustness_results} (right side) we can see that even after the modification CW-$L_2$ still struggles to fool the detector. The boundary attack, however, succeeds completely on CIFAR10 and fails completely on MNIST, even when it managed to sometimes fool the detector without directly trying. We hypothesize that this is because the area between two images of separate classes, where the boundary attack needs to pass through, is correctly detected as out of distribution only for MNIST and not CIFAR10. We explore this further below.

\subsection{Ambiguous Inputs as Average Case Analysis}\label{sec:image_interp}

\begin{figure}[h]%
    \begin{minipage}{.49\textwidth}
    \centering
    \subfloat[Log-likelihood of interpolations on MNIST]{{\includegraphics[width=.9\textwidth]{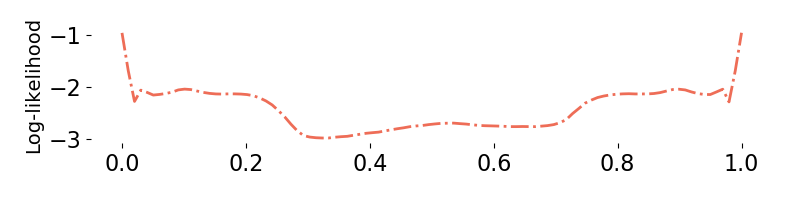} }}%
    \label{fig:mnist_inter_a}%
    \end{minipage}
    \begin{minipage}{.49\textwidth}
    \centering
    \vspace*{-0.5cm}
    \hspace*{-1cm}    
    \subfloat[Log-likelihood of interpolations on CIFAR10]{{\includegraphics[width=.97\textwidth]{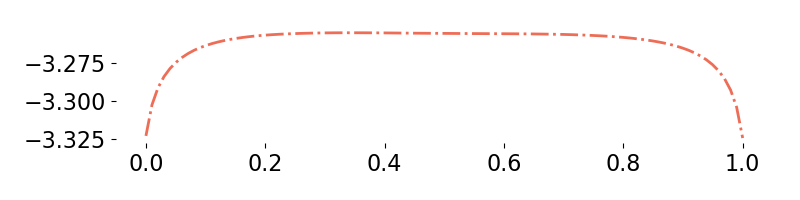} }}%
    \label{fig:cifar_inter_a}%
    \end{minipage}
    \begin{minipage}{.49\textwidth}
    \centering
    \subfloat[Class probability of interpolations on MNIST]{{\includegraphics[width=.9\textwidth]{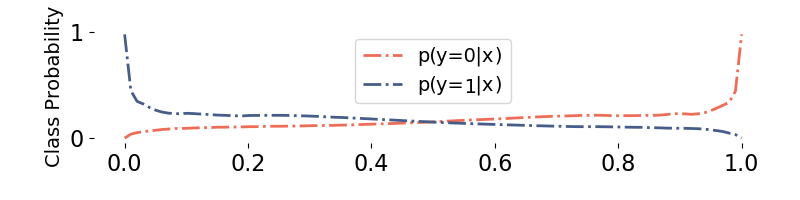} }}%
    \label{fig:mnist_inter_b}%
    \end{minipage}
        \begin{minipage}{.49\textwidth}
    \centering
    \subfloat[Class probability of interpolations on CIFAR10]{{\includegraphics[width=.9\textwidth]{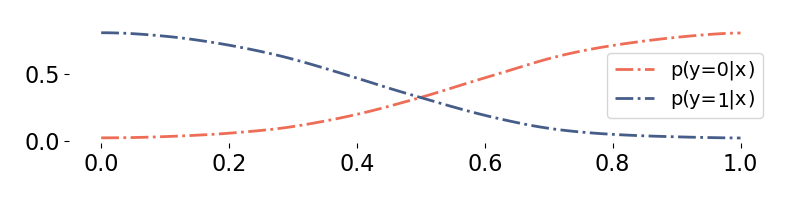} }}%
    \label{fig:cifar_inter_b}%
    \end{minipage}    
    \caption{Average Log likelihoods and class probabilities for interpolations between  data points from different classes, x-axis is interpolation coefficient $\alpha$. The MNIST model behaves as desired and robustly detects interpolated images. The CIFAR10 model, however, fails strikingly and interpolatd images are consistently more likely than true data under the model.}
    \label{fig:interp_mean}
\end{figure}

To understand why the learned networks are easily attacked on CIFAR but not on MNIST with the modified boundary attack, we explore the probability density of interpolations between two real images. This is inspired by the fact that the boundary attack proceeds along the line between the attacked image and the initial image. The minimum we would expect from a decent generative model is to detect the intermediate middle images as ``non-image'' with low likelihood. If this was the case and each class was a disconnected high likelihood region, the boundary attack would have a difficult time when starting from a different class image.

Given images $x_0$ and $x_1$ from separate classes $y_0$ and $y_1$ and for $\alpha\in[0,1]$ we generate an intermediate image $x_\alpha=\alpha\cdot x_1 + (1-\alpha)x_0$, and run the model on various $\alpha$ values to see the model prediction along the line. For endpoints we sample real images that are classified correctly and are above the detection threshold used previously. See Fig. \ref{fig:interp} for interpolation examples from MNIST and CIFAR.

In figure \ref{fig:interp_mean} (a) we see the average results for MNIST for 1487 randomly selected pairs. As expected, the likelihood goes down as $\alpha$ moves away from the real images $x_0$ and $x_1$. We also see the probability of both classes drop rapidly as the network predictions become less confident on the intermediate images. Sampling 100 $\alpha$ values uniformly in the range $[0,1]$ we can also investigate how many of the interpolations all stay above the detection threshold, i.e. all intermediate images are considered real by the model, and find that this happens only in 0.5\% of the cases.

On CIFAR images, using 1179 pairs, we get a very different picture (see fig. \ref{fig:interp_mean} (b)). Not only does the intermediate likelihood not drop down, it is even higher on average than on the real images albeit to a small degree. In classification we also see a very smooth transition between classes, unlike the sharp drop in the MNIST experiment. Lastly, 100\% of the interpolated images lay above the detection threshold and none are detected as a ``non-image'' (for reference the detection threshold has 78.6\% recall on real CIFAR10 test images). This shows that even with good likelihood and reasonable accuracy, the model still ``mashes" the classes together, as one can move from one Gaussian to another without passing through low likelihood regions in-between. It also clarifies why the boundary attack is so successful on CIFAR but fails completely on MNIST. We note that the basic attack on MNIST is allowed to pass through these low density areas which is why it sometimes succeeds.

\subsection{Class-unrelated Entropy is to Blame}

\begin{wrapfigure}{r}{0.35\textwidth}
\begin{tabular}{c}
\includegraphics[width=.33\textwidth]{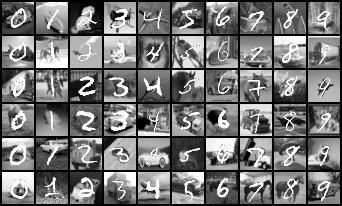}\\
\includegraphics[width=.33\textwidth]{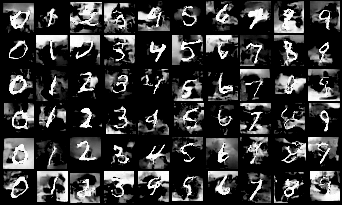}
\end{tabular}
\caption{Top: Samples from the BG-MNIST-0 dataset. Bottom: Samples from conditional generative model trained on the dataset. Note how the model has learnd to capture digit identity.}
\end{wrapfigure}


In this section, we show that the difference in performance between CIFAR10 and MNIST can largely be attributed to how the entropy in the datasets is distributed, i.e how much the uncertainty in the data distribution is reduced after conditioning on the class label. For MNIST digits, a large source of uncertainty in pixel-space comes from the class label. Given the class, most pixels can be predicted accurately by simply taking the mean of the training set in each class. This is exactly why a linear classifier performs well on MNIST. Conversely on CIFAR10, after conditioning on the class label there still exists considerable uncertainty. Given the class is ``cat,'' there still exists many complicated sources of uncertainty such as where the cat is and how it is posed. In this dataset, a much larger fraction of the uncertainty is not accounted for after conditioning on the label. This is not a function of the domain or the dimensionality of the dataset, it is a function of the dataset itself.

To empirically verify this, we have designed a dataset which replicates the challenges of CIFAR10 and places them onto a problem of the same \textit{discriminative} difficulty as MNIST. To achieve this, we simply replaced the black backgrounds of MNIST images with randomly sampled (downsampled and greyscaled) images from CIFAR10. In this dataset, which we call background-MNIST (BG-MNIST), the classification problem is identically predictable from the same set of pixels as in standard MNIST but modeling the data density is much more challenging. 

To further control the entropy in a fine-grained manner, we convolve the background with a Gaussian blur filter with various bandwidths to remove varying degrees of high frequency information. With high blur, the task begins to resemble standard MNIST and conditional generative models should perform as they do on MNIST. With low and no blur we expect them to behave as they do on CIFAR10. 

Table \ref{tab:bgmnist-quant} summarizes the performance of conditional generative models on BG-MNIST. We train models with a ``Reweighted'' discriminative objective as in Section \ref{sec:train}. The reweighting allows them to perform well as classifiers but the likelihood of their generative component falls to below CIFAR10 levels. More strikingly, now when we interpolate between datapoints we observe behavior identical to our CIFAR10 models. This can be seen in Figure \ref{fig:interp_mean_bg}. Thus, we have created a dataset with the discriminative difficulty of MNIST and the generative difficulty of CIFAR10. 

\begin{table}[h]
\begin{center}
\begin{tabular}{c|c|ccc|c}  \label{table:bgmnist}
     & {\textbf{MNIST}} & {\textbf{BG-MNIST-5}} & {\textbf{BG-MNIST-1}} & {\textbf{BG-MNIST-0}} & {\textbf{CIFAR10}} \\ \midrule   
    {\textit{\% Acc}}  & 99  & 99 & 99 & 98  & 84 \\ 
    {\textit{bits/dim}}& 1.10  & 1.67 & 3.30 & 4.58  & 3.53 \\ 
\end{tabular}
\end{center}
\caption{Conditional generative models trained on BG-MNIST. BG-MNIST-$X$ indicates the bandwith of blur applied to CIFAR10 backgrounds.}
\label{tab:bgmnist-quant}
\end{table}


\begin{figure}[h]%
    \begin{minipage}{.5\textwidth}
    \centering
    \vspace*{-0.1cm}
    \subfloat[Log-likelihood of interpolations on BG-MNIST-0]{{\includegraphics[width=.97\textwidth]{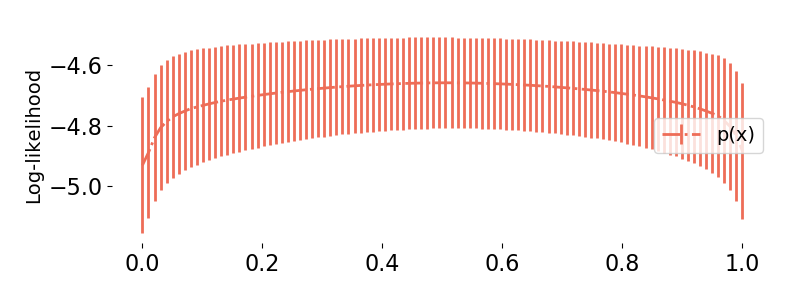} }}%
    \label{fig:bmnist_inter_a}%
    \end{minipage}
    \begin{minipage}{.5\textwidth}
    \centering
    \subfloat[Class probability of interpolations on BG-MNIST-0]{{\includegraphics[width=.97\textwidth]{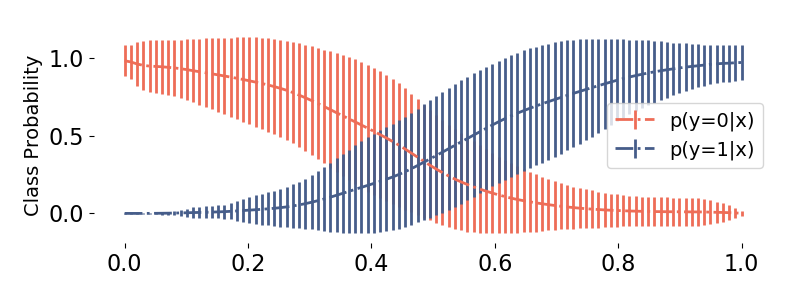} }}%
    \label{fig:bgmnist_inter_b}%
    \end{minipage}    
    \caption{Average log-likelihoods and class probabilities for interpolations between BG-MNIST-0 datapoints. While classification is on par with MNIST models, the likelihood exhibits the same failures as CIFAR10 models.}
    \label{fig:interp_mean_bg}
\end{figure}


\section{Related Work} \label{sec:related}

Despite state of the art performance in many tasks, deep neural networks have been shown to be fragile where small image transformations, \citep{azuley2018} or background object transplant \citep{amir2018} can greatly change predictions. In the more challenging case of adversarial pertubations, deep neural networks are known to be vulnerable to adversarial attacks \citep{AkhtarM18}, and while many attempts have been made to train robust models or detect malicious attacks, significant progress towards truly robust models has been made only on MNIST~\citep{Schott2018a,madry2017towards}. Even CIFAR10 remains far from being solved from a standpoint of adversarial robustness.

One common belief is that adversarial attacks succeed by moving the data points off the data manifold, and therefore can possibly be detected by a generative model which should assign them low likelihood values. Although this view has been challenged in \citep{Gilmer2018}, 
we now discuss how their setting needs to be extended to fully study robustness guarantees of conditional generative models.

Recent work \citep{PixelDefend,frosst2018darccc,li2018generative} showed that a generative model can detect and defend adversarial attacks. However, there is a caveat when evaluating detectability of adversarial attacks: the attacker needs to be able to attack the detection algorithm as well. Not doing so has been shown to lead to drastically false robustness claims \citep{carlini2017adversarial}. In \citep{li2018generative} the authors report difficulties training a high accuracy conditional generative model on CIFAR10, and resort to evaluation on a 2-class classification problem derived from CIFAR10. While they do show robustness similar to our Carlini-Wagner results, they do not apply the boundary attack which we found to break our models on CIFAR10. This highlights the need to utilize a diverse set of attacks. In \citep{Schott2018a} a generative model was used not just for adversarial detection but also robust classification on MNIST, leading to state-of-the-art robust classification accuracy. The method was only shown to work on MNIST, and is very slow at inference time. However, overall it provides an existence proof that conditional generative models can be very robust in practice. In \citep{ghosh2019resisting} the authors also use generative models for detection and classification but only show results with the relatively weak FGSM attack, and on simple datasets. As we see in Fig. \ref{fig:interp} and discuss in section \ref{sec:exp}, generative models trained on MNIST can display very different behavior than similar models trained on more challenging data like CIFAR10. This shows how success on MNIST may often not translate to success on other datasets.

\section{Conclusion}

In this work we explored limitations, both in theory and practice, of using conditional generative models to detect adversarial attacks. Most practical issues arise due to likelihood, the standard objective and evaluation metric for generative models by which probabilities can be computed. We conclude that likelihood-based density modeling and robust classification may fundamentally be at odds with one another as important aspects of the problem are not captured by this training and evaluation metric. This has wide-reaching implications for applications like out-of-distribution detection, adversarial robustness and generalization as well as semi-supervised learning with these models. 

\bibliography{iclr2020_conference}
\bibliographystyle{iclr2020_conference}

\appendix
\newpage

\section{Training Conditional Generative Models}\label{sec:train}

\subsection{Likelihood-based Generative Models as Generative Classifiers}\label{sec:flow_background}
We present a brief overview of flow-based deep generative models, conditional generative models, and their applications to adversarial example detection.

Flow based generative models \cite{rezendeM15,DinhKB14,DinhSB17,kingma2018glow} compute exact densities for complex distributions using the change of variable formula. They achieve strong empirical results \cite{kingma2018glow} and the closed form likelihood makes them easier to analyze than the closely related VAE   \cite{kingma2014semi}. The main idea behind flow-based generative models is to model the data distribution using a series of bijective mappings $z_N=f(x)=f_N\circ f_{N-1}...\circ f_1(x)$ where $z_N$ has a known simple distribution, e.g. Gaussian, and all $f_i$ are parametric functions for which the determinant of the Jacobian can be computed efficiently. Using the change
of variable formula we have $\log(p(x))=\log(p(z_N))+\sum_{i=1}^N\log(|\det(J_i(z_i))|)$
 where $J_i$ is the Jacobian of $f_i$ and $z_{i-1}=f_{i-1}...\circ f_1(x)$.
 
 The standard way to parameterize such functions $f_{i}$ is by splitting the input $z_{i-1}$ into two $z_{i-1}=(z_{i-1}^1,z_{i-1}^2)$ and chose 
 \begin{equation}\label{eq:bijection_split}
 f_i\left(
     \begin{bmatrix}
    z_{i-1}^1    \\
    z_{i-1}^2      
\end{bmatrix}
\right) =
\begin{bmatrix}
    z_{i-1}^1    \\
    s(z_{i-1}^1)\odot z_{i-1}^2 + t(z_{i-1}^1)
\end{bmatrix}
 \end{equation}
 which is invertible as long as $s(z_{i-1}^1)_j=s_j\neq 0$ and we have $\log(|\det(J_i(z_{i-1}))|)=\sum_j\log(|s_j|)$. For images the splitting is normally done in the channel dimension. These models are then trained by maximizing the empirical log likelihood (MLE).  
 
 A straightforward way to turn this generative model into a conditional generative model is to make $p(z_N)$ a Gaussian mixture model (GMM) with one Gaussian per class, i.e. $p(z_N|y)=\mathcal{N}(\mu_y,\Sigma_y)$. Assuming $p(y)$ is known, then maximizing $\log(p(x,y))$ is equivalent to maximizing $\log(p(x|y))=\log(p(z_N|y))+\sum_{i=1}^N\log(|\det(J_i(z_i))|)$. At inference time, one can classify by simply using Bayes rule. Note that directly optimizing $\log(p(x|y))$ results in poor classification accuracy as discussed in section \ref{sec:MLE}. This issue was also addressed in the recent hybrid model work \cite{nalisnick2018do}.

We will now describe some of the various approaches we investigated in order to train the best possible flow-based conditional generative models, to achieve a better trade-off betwen classification accuracy and data-likelihood as compared to commonly-used approaches. We also discuss some failed approaches in the appendix. 
\subsection{Reweighting}
The most basic approach, which has been used before in various works, is to reweight the discriminative part in eq. \eqref{eq:NNL}. While this can produce good accuracy, it can have an unfavorable trade-off with the NLL where good accuracy comes with severely sub-optimal NLL. This tradeoff has also been shown in \cite{nalisnick2018do} where they train a somewhat similar model but classify with a generalized linear model instead of a Gaussian mixture model.

\subsection{Architecture change}
 Padding channels has been shown to increase accuracy in invertible networks \cite{irevnet,behrmann2018invertible}. This helps ameliorate a basic limitation in bijective mappings (see Eq. \eqref{eq:bijection_split}), by allowing to increase the number of channels as a pre-processing step. Unlike the discriminative i-RevNet, we cannot just pad zeros as that would not be a continuous density. Instead we pad channels with uniform(0,1) random noise. In effect we do not model MNIST and CIFAR10 as is typically done in the literature, but rather the zero-padded version of those. 
While the ground-truth likelihoods for the padded and un-padded datapoints are the same due to independence of the uniform noise and unit density of the noise, this is not guaranteed to be captured by the model, making likelihoods very similar but not exactly comparable with the literature. This is not an issue for us, as we only compare models on the padded datasets. 

\subsection{Split prior}\label{sec:pad}
One reason MLE is bad at capturing the label is because a small number of dimensions have a small effect on the NLL. Fortunately, we can use this property to our advantage. As the contribution of the conditional class information is negligible for the data log likelihood, we choose to model it in its distinct subspace, as proposed by~\cite{jacobsen2018excessive}. Thus, we partition the hidden dimensions $z=(z_s,z_n)$ and only try to enforce the low-dimensional $z_s$ to be the logits.  This has two advantages: 1) we do not enforce class-conditional dimensions to be factorial; and 2) we can explicitly up-weight the loss on this subspace and treat it as standard logits of a discriminative model. A similar approach is also used by semi-supervised VAEs~\cite{kingma2014semi}. This lets us jointly optimize the data log-likelihood alongside a classification objective without requiring most of the dimensions to be discriminative. Using the factorization $ p(z_s,z_n|y) = p(z_s|y)\cdot p(z_n|z_s,y)$ we model $p(z_s|y)$ as Gaussian with class conditional mean $e_i=(0,...,0,1,...0)$ and covariance matrix scaled by a constant. The distribution $p(z_n|z_s,y)$ is modeled as a Gaussian where the mean and variances are a function of $y$ and $z_n$.

\section{Implementation details}\label{sec:App_Imp}
We pad MNIST with zeros so both datasets are 32x32 and subtract 0.5 from both datasets to have a [-0.5,0.5] range. For data augmentation we do pytorch's random crop with a padding of 4 and 'edge' padding mode, and random horizontal flip for CIFAR10 only. \\

The model is based on GLOW with 4 levels, affine coupling layers, 1x1 convolution permutations and actnorm in a multi-scale architecture. We choose 128 channels and 12 blocks per level for MNIST and 256 channels and 16 blocks for CIFAR10. In both MNIST and CIFAR10 experiments we double the number of channels with uniforrm(0,1) noise which we scale down to the range $[0,2/256]$ (taking it into account in the Jacobian term). One major difference is that we do the squeeze operation at the end of each level instead of the beginning, which is what allows us to use 4 levels. This is  possible because with the added channels the number of channels is even and the standard splitting is possible before the squeeze operation. \\

The models are optimized using Adam, for 150 epochs. The initial learning rate is $1e-3$, decayed by a factor of 10 every 60 epochs. For the reweighted optimization the objective is 
\begin{equation}
    loss=-\log(p(x|y))/D -\log(p(y|x))
\end{equation}
where D is the data dimension (3x32x32 for CIFAR, 1x32x32 for MNIST).

For adversarial detection we use a threshold of 1.4 for MNIST (100\% of test data are below the threshold) and 4. for CIFAR10 (78.6\% of test images are below the threshold).
\section{Negative results}\label{sec:negative}
In this work explored many ideas in order to achieve better tradeoff between accuracy with little or no impact.
\subsection{Robust priors}
Since the Gaussian prior is very sensitive to outliers, one idea was that  confident miss-classifications carry a strong penalty which might result in ``messing" all the classes together. A solution would be to replace the Gaussian with a more robust prior, e.g. Laplace or Cauchy. Another idea we explored is a mixture of Gaussian and Laplace or Cauchy using the same location parameter. In our experiments we did not see any significant difference from the Gaussian prior.

\subsection{Label Smoothing}
Another approach to try to address the same issue is a version of label smoothing. In this new model the Gaussian clusters are a latent variable that is equal to the real label with probability $1-\epsilon$ and uniform on the other labels with probability $\epsilon$. Using this will bound the error for confident miss-classification as long as the data is close to one of the Gaussian centers.

\subsection{Flow-GAN}
As we claimed the main issue is with the MLE objective, it seems like a better objective is to optimize $\textit{KL}\left(p(x,y)||p_\theta(x,y)\right)$ or the Jensen-Shannon divergence as this KL term is highly penalized for miss-classification. It is also more natural when considering robustness against adversarial attacks. Optimizing this directly is hard, but generative adversarial networks (GANs) \cite{Goodfellow2014} in theory should also optimize this objective. Simply training a GAN would not work as we are interested in the likelihood value for adversarial detection and GANs only let you sample and does not give you any information regarding an input image.\\

Since flow algorithms are bijective, we could combine the two objective as was done in the flow-GAN paper \cite{GroverDE18}. We trained this approach with various conditional-GAN alternatives and found it very hard to train. GANs are know to be unstable to train, and combining them with the unstable flow generator is problematic. 

\section{Analytical counter example:}
Assume $p(y=1)=p(y=0)=q(y=1)=q(y=0)=1/2$ and
\begin{align}
 &p(x|0) =\lambda_1U(0,1)+(1-\lambda_1)U(1,1+\Delta) \\
 &p(x|1) =\lambda_2U(0,1)+(1-\lambda_2)U(2,3) \\
  &q(x|0) =U(0,1+\Delta)\\
 &q(x|1) =\lambda_2U(0,1)+(1-\lambda_2)U(2,3) 
\end{align}

where $U(a,b)$ is the uniform distribution on the annulus $R^d(a,b)=\{x\in\mathbb{R}^d:\,a\leq||x||\leq b\}$ in dimension $d$.

\begin{lemma}
For $||x||<1$ we have
\begin{align}
 & p(0|x) = \frac{\lambda_1}{\lambda_1+\lambda_2} \\
 & q(0|x) = \frac{1}{1+\lambda_2(1+\Delta)^d}  
\end{align}
\end{lemma}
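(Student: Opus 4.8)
The statement is a direct application of Bayes' rule, so the plan is to (i) record the values of each mixture component at a point $x$ with $\|x\|<1$, (ii) assemble $p(x\mid 0),p(x\mid 1),q(x\mid 0),q(x\mid 1)$ there, and (iii) divide. First I would note that since $p(y=0)=p(y=1)=\tfrac12$ and likewise for $q$, Bayes' rule gives $p(0\mid x)=\dfrac{p(x\mid 0)}{p(x\mid 0)+p(x\mid 1)}$ and $q(0\mid x)=\dfrac{q(x\mid 0)}{q(x\mid 0)+q(x\mid 1)}$, with the priors cancelling.

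Next I would evaluate the uniform densities. Writing $V_d$ for the volume of the unit ball in $\mathbb{R}^d$, the annulus $R^d(a,b)$ has volume $(b^d-a^d)V_d$, so $U(a,b)$ has density $\bigl((b^d-a^d)V_d\bigr)^{-1}$ on $R^d(a,b)$ and $0$ elsewhere. For $\|x\|<1$ this means $U(0,1)(x)=1/V_d$, while $U(1,1+\Delta)(x)=0$, $U(2,3)(x)=0$, and $U(0,1+\Delta)(x)=1/((1+\Delta)^dV_d)$ (the boundary sphere $\|x\|=1$ is a null set and can be ignored). Substituting into the mixtures: for $\|x\|<1$, $p(x\mid 0)=\lambda_1/V_d$, $p(x\mid 1)=\lambda_2/V_d$, $q(x\mid 0)=1/((1+\Delta)^dV_d)$, and $q(x\mid 1)=\lambda_2/V_d$.

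Finally I would plug these into the two Bayes expressions. The $V_d$ factors cancel in $p(0\mid x)=\frac{\lambda_1/V_d}{\lambda_1/V_d+\lambda_2/V_d}=\frac{\lambda_1}{\lambda_1+\lambda_2}$, and for $q$ one gets $q(0\mid x)=\frac{1/((1+\Delta)^dV_d)}{1/((1+\Delta)^dV_d)+\lambda_2/V_d}$; multiplying numerator and denominator by $(1+\Delta)^dV_d$ yields $\frac{1}{1+\lambda_2(1+\Delta)^d}$, as claimed.

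There is no real obstacle here: the only things to be careful about are that the outer annuli ($U(1,1+\Delta)$, $U(2,3)$) vanish on the open unit ball, that the scaling $\mathrm{Vol}(R^d(a,b))=(b^d-a^d)V_d$ is what produces the $(1+\Delta)^d$ factor, and that the $\|x\|=1$ boundary is measure zero so it does not matter whether it is included. The remainder is one line of algebra.
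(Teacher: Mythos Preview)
Your proof is correct and follows the same approach as the paper: compute the uniform annulus density as $1/(C_d(b^d-a^d))$ with $C_d$ the unit-ball volume, then apply Bayes' rule with the equal priors cancelling. The paper's proof is in fact a one-line sketch of exactly what you wrote out in detail.
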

\begin{proof}
The $U(a,b)$ density (when it isn't zero) is $\frac{1}{C_d(b^d-a^d)}$ where $c_d$ is the volume of the $d$-dimensional unit ball. The proof follows by a simple use of Bayes rule.
\end{proof}
so by having $\lambda_1>>\lambda_2>>\frac{1}{(1+\Delta)^d}$ we can have the model switch wrongfully predictions from $y=0$ to $y=1$ when we move $x$ from the annulus  $R^d(1,1+\Delta)$ to $R^d(0,1)$

\begin{lemma}
If $\lambda_1>\frac{1}{(1+\Delta)^d}$ and $\lambda_1<1-e^{-\epsilon}$ then $\KL(q(x,y)||P(x,y))\leq \epsilon$
\end{lemma}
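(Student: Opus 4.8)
The plan is to collapse the joint KL to a one-dimensional quantity and then bound it by elementary means. First I would use that $p$ and $q$ have the same label marginal and that $q(x\mid 1)=p(x\mid 1)$, so the joint KL factorizes as
\begin{equation}
\KL\!\left(q(x,y)\,\|\,p(x,y)\right)=\tfrac12\KL\!\left(q(x\mid 0)\,\|\,p(x\mid 0)\right)+\tfrac12\KL\!\left(q(x\mid 1)\,\|\,p(x\mid 1)\right)=\tfrac12\KL\!\left(q(x\mid 0)\,\|\,p(x\mid 0)\right).
\end{equation}
So it suffices to control $\KL(q(x\mid 0)\,\|\,p(x\mid 0))$.

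Next I would observe that, with $m:=(1+\Delta)^d$, the density formula from the previous lemma shows $q(x\mid 0)=U(0,1+\Delta)$ is itself the mixture $\tfrac1m U(0,1)+\tfrac{m-1}{m}U(1,1+\Delta)$: both sides have constant density $1/(C_d m)$ on $R^d(0,1)$ and on $R^d(1,1+\Delta)$ respectively. Hence $q(x\mid 0)$ and $p(x\mid 0)=\lambda_1 U(0,1)+(1-\lambda_1)U(1,1+\Delta)$ are mixtures of the \emph{same} two components, whose supports overlap only on the sphere $\|x\|=1$, a null set. On each component the log density-ratio is constant, so the KL between the mixtures reduces to the KL between the mixing weights:
\begin{equation}
\KL\!\left(q(x\mid 0)\,\|\,p(x\mid 0)\right)=\frac1m\log\frac{1/m}{\lambda_1}+\frac{m-1}{m}\log\frac{(m-1)/m}{1-\lambda_1}.
\end{equation}

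Finally I would bound the two terms using the hypotheses. Since $\lambda_1>1/m$, the first term is nonpositive; since $\tfrac{m-1}{m}<1$ and (again by $\lambda_1>1/m$) the second logarithm is nonnegative, the second term is at most $\log\frac{1}{1-\lambda_1}$, and the hypothesis $\lambda_1<1-e^{-\epsilon}$, i.e. $1-\lambda_1>e^{-\epsilon}$, gives $\log\frac{1}{1-\lambda_1}<\epsilon$. Therefore $\KL(q(x\mid 0)\,\|\,p(x\mid 0))<\epsilon$, and combining with the factorization, $\KL(q(x,y)\,\|\,p(x,y))<\epsilon/2\le\epsilon$. The one step I would treat with care is the mixture-collapse identity: checking that $q(x\mid 0)$ really decomposes with weights $(1/m,(m-1)/m)$, that the two annuli are disjoint up to measure zero, and that the degenerate endpoints $\lambda_1\in\{0,1\}$ (where a mixing weight vanishes and a logarithm diverges) are ruled out by the hypotheses; everything after that is a one-line computation from the explicit uniform densities.
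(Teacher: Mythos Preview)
Your proof is correct and follows essentially the same route as the paper: reduce the joint KL via the chain rule (using $q(y)=p(y)$ and $q(x\mid 1)=p(x\mid 1)$), compute $\KL(q(x\mid 0)\,\|\,p(x\mid 0))$, and bound it by $\log\frac{1}{1-\lambda_1}<\epsilon$. Your mixture-collapse observation---that $U(0,1+\Delta)=\tfrac{1}{m}U(0,1)+\tfrac{m-1}{m}U(1,1+\Delta)$ so the KL reduces to a Bernoulli KL---is just a conceptual repackaging of the paper's direct integral computation and yields the identical expression; you are also slightly more careful than the paper in retaining the factor $\tfrac12$ from the expectation over $y$.
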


\begin{proof}
Using the chain rule for KL divergence, $\KL(P(x, y)||Q(x, y)) = \KL(P(y)||Q(y)) + \E_y[\KL(P(x|y)||Q(x|y))]$ we get that  $\KL(q(x, y)||P(x,y))=\KL(q(x|y=0)||P(x|y=0))$. We now have

\begin{align}
    & \KL(q(x|y=0)||P(x|y=0))=\int_{R^d(0,1)}\frac{1}{C_d(1+\Delta)^d}\log\left(\frac{\frac{1}{C_d(1+\Delta)^d}}{\frac{\lambda_1}{C_d}}\right) \\
    & + \int_{R^d(1,1+\Delta)}\frac{1}{C_d(1+\Delta)^d}\log\left(\frac{\frac{1}{C_d(1+\Delta)^d}}{\frac{1-\lambda_1}{C_d((1+\Delta)^d-1)}}\right)=\frac{-\log(\lambda_1(1+\Delta)^d)}{(1+\Delta)^d} \\
    & + \frac{(1+\Delta)^d-1}{(1+\Delta)^d}\log\left(\frac{(1+\Delta)^d-1}{(1-\lambda_1)(1+\Delta)^d}\right)\leq \log\left(\frac{1}{1-\lambda_1}\right)<\epsilon
\end{align}
\end{proof}

\begin{lemma}
If $1>\lambda_1>\frac{1}{(1+\Delta)^d}$ and $\lambda_1<\frac{\epsilon}{d\log(1+\Delta)}$ then $\KL(P(x,y)||q(x,y))\leq \epsilon$
\end{lemma}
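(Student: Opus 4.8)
The plan is to reduce the joint KL to the one-dimensional computation used in the proof of Lemma 2, evaluate it in closed form, and then let the hypothesis $\lambda_1 > (1+\Delta)^{-d}$ do the work of discarding the term that would otherwise dominate. First I would apply the chain rule $\KL(P(x,y)||q(x,y)) = \KL(P(y)||q(y)) + \E_y[\KL(P(x|y)||q(x|y))]$. Since $p(y)=q(y)=1/2$ the first term vanishes, and since $p(x|1)=q(x|1)$ the $y=1$ summand vanishes, so $\KL(P(x,y)||q(x,y)) = \frac12\KL(p(x|0)||q(x|0))$; the factor $\frac12$ only helps, so I will drop it to match the proof of Lemma 2.

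Next I would evaluate $\KL(p(x|0)||q(x|0))$ directly. Both densities are piecewise constant: using that $U(a,b)$ has density $\frac{1}{C_d(b^d-a^d)}$ on its support, on $R^d(0,1)$ the densities take values $p(x|0)=\lambda_1/C_d$ and $q(x|0)=1/(C_d(1+\Delta)^d)$, while on $R^d(1,1+\Delta)$ they take values $p(x|0)=(1-\lambda_1)/(C_d((1+\Delta)^d-1))$ and $q(x|0)=1/(C_d(1+\Delta)^d)$. Multiplying each log-ratio by the volume of the corresponding annulus and abbreviating $V:=(1+\Delta)^d$ gives
\[
\KL(p(x|0)||q(x|0))=\lambda_1\log(\lambda_1 V)+(1-\lambda_1)\log\left(\frac{(1-\lambda_1)V}{V-1}\right).
\]

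The crucial observation --- and the place where the hypothesis $\lambda_1>V^{-1}$ is essential rather than cosmetic --- is that $\frac{(1-\lambda_1)V}{V-1}\le 1$ holds exactly when $\lambda_1 V\ge 1$. Hence under the hypothesis the second (annulus) term is non-positive and can be dropped, and since $\lambda_1<1$ forces $\log\lambda_1<0$ we are left with $\KL(p(x|0)||q(x|0))\le\lambda_1\log V=\lambda_1 d\log(1+\Delta)$, which is $<\epsilon$ precisely by the assumption $\lambda_1<\epsilon/(d\log(1+\Delta))$. I do not anticipate a real obstacle here: the integral is elementary, and the only subtlety is noticing that $\lambda_1>(1+\Delta)^{-d}$ is exactly the threshold at which the annulus contribution changes sign (the case $\Delta=0$ being vacuous). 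This ``mode-covering'' direction is in fact easier than the $\KL(q||p)$ bound of Lemma 2, since the only region on which $p$ and $q$ disagree nontrivially is the unit ball, whose $p$-mass is the small quantity $\lambda_1$.
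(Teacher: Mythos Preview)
Your proof is correct and follows essentially the same route as the paper: reduce via the chain rule to $\KL(p(x|0)\|q(x|0))$, split the integral over the inner ball and the outer annulus, and bound by $\lambda_1 d\log(1+\Delta)<\epsilon$. Your write-up is in fact more careful than the paper's, which jumps directly to the bound without spelling out that $\lambda_1>(1+\Delta)^{-d}$ is precisely what makes the annulus contribution non-positive (and also silently drops the factor $\tfrac12$ from averaging over $y$, which you note explicitly).
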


\begin{proof}
Again using the $\KL$ chain rule we have

\begin{align}
    & \KL(P(x|y=0)||q(x|y=0))=\lambda_1\int_{R^d(0,1)}\frac{1}{C_d}\log\left(\frac{\frac{\lambda_1}{C_d}}{\frac{1}{C_d(1+\Delta)^d}}\right)\\
    &\int_{R^d(1,1+\Delta)}\frac{(1-\lambda_1)}{C_d((1+\Delta)^d-1)}\log\left(\frac{\frac{(1-\lambda_1)}{C_d((1+\Delta)^d-1)}}{\frac{1}{C_d(1+\Delta)^d}}\right)\leq \lambda_1d\log(1+\Delta)<\epsilon
\end{align}
\end{proof}

\begin{prop}
For all $(\epsilon,\delta,\Delta)$ there is a distribution $p$ and an approximation $q$ in dimension $d=\tilde{\mathcal{O}}\left(\frac{\log(\frac{\delta}{1+\delta})+\log(\frac{1}{\epsilon})}{\log(1+\Delta)}\right)$ such that
\begin{equation}
    \KL(q(x,y)||p(x,y))<\epsilon,\quad\KL(p(x,y)||q(x,y))<\epsilon
\end{equation}
but with probability greater then $1/3$ over samples $x\sim p$ there is an adversarial example $\bar{x}$ satisfying:
\begin{enumerate}

    \item $y_q(x)=y_p(x)$ with $p(y_p(x)|x)$ and $q(y_q(x)|x)$ greater or equal to $1-\delta$. The original point is classifier correctly and confidently.
    \item $y_q(x)\neq y(\bar{x})$, $y_q(\bar{x})= y(\bar{x})$. We change the prediction without changing the ground-truth label.
    \item $q(y_q(\bar{x})|\bar{x})<\delta$, $p(y_p(\bar{x})|\bar{x})>1-\delta$. The classifier is confident in its wrong prediction.
    \item $||x-\bar{x}||<\Delta$. We make a small change to the inputs.
    \item The density $q(\bar{x})$ is greater or equal to the median density, making the attack undetectable by observing $q(x)$.
    \item For $\Delta<1$ the probability in any  radius ball can be made as small as desired.
    \item The total variation of the distribution can be made as small as desired. 
\end{enumerate}
\end{prop}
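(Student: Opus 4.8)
My plan is to instantiate the construction of Lemmas~1--3 with a careful choice of $\lambda_1,\lambda_2$ and of the dimension $d$, and then to read off the seven properties almost mechanically. I will assume $\Delta\le 1$; for $\Delta>1$ one simply pushes the far annulus $U(2,3)$ of class~$1$ outward so that it stays disjoint from $U(1,1+\Delta)$, and nothing else in the argument changes.

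The attack is fixed as follows: given a sampled data point $x$ in the outer annulus $R^d(1,1+\Delta)$ of class~$0$, the adversarial point $\bar x$ is obtained by moving $x$ radially inward into the unit ball $R^d(0,1)$. By Lemma~1, once $\|\bar x\|<1$ we have $p(0\mid\bar x)=\lambda_1/(\lambda_1+\lambda_2)$ and $q(0\mid\bar x)=1/(1+\lambda_2(1+\Delta)^d)$, so to make $p$ confidently correct and $q$ confidently wrong on $\bar x$ I need
\begin{equation*}
\lambda_2<\tfrac{\delta}{1-\delta}\,\lambda_1
\qquad\text{and}\qquad
\lambda_2(1+\Delta)^d>\tfrac{1-\delta}{\delta}.
\end{equation*}
These two together force $\lambda_1>\big(\tfrac{1-\delta}{\delta}\big)^2(1+\Delta)^{-d}$, whereas Lemma~3 asks for $\lambda_1<\epsilon/(d\log(1+\Delta))$ (Lemma~2's requirement $\lambda_1<1-e^{-\epsilon}$ is then automatic once $d$ is large). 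So the crux is to pick $d$ so that
\begin{equation*}
\Big(\tfrac{1-\delta}{\delta}\Big)^2\frac{1}{(1+\Delta)^d}<\frac{\epsilon}{d\log(1+\Delta)};
\end{equation*}
taking logarithms, this holds once $d\log(1+\Delta)\gtrsim \log d+\log\tfrac1\epsilon+2\log\tfrac{1-\delta}{\delta}$, which is achievable with $d=\tilde{\mathcal O}\!\left(\frac{\log(1/\epsilon)+\log(1/\delta)}{\log(1+\Delta)}\right)$, matching the stated bound. I would then fix such a $d$, choose any $\lambda_1$ in the resulting nonempty window, and any $\lambda_2$ in its window. Note $\lambda_1\le \epsilon/(d\log(1+\Delta))$ can be driven to $0$ by enlarging $d$ — I use this repeatedly — and in particular $\lambda_1<1/3$, so $\Pr_{x\sim p}[\,y=0,\ \|x\|\in[1,1+\Delta]\,]=\tfrac12(1-\lambda_1)>\tfrac13$, which is the source of the ``probability greater than $1/3$''.

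With parameters in hand the verification is short. Lemmas~2--3 give the two KL bounds. For a sampled $x$ with $y=0$ and $\|x\|=r\in[1,1+\Delta)$, set $\bar x=\tfrac{1-\eta}{r}x$ with $0<\eta<1+\Delta-r$: then $\|\bar x\|<1$ and $\|x-\bar x\|=r-1+\eta<\Delta$, which is property~4 (the excluded sphere $\|x\|=1+\Delta$ has $p$-measure zero, so the $1/3$ bound is untouched). Since $1+\Delta<2$, the annulus $R^d(1,1+\Delta)$ lies only in the support of $p(\cdot\mid0)$ and $q(\cdot\mid0)$, so $p(0\mid x)=q(0\mid x)=1$ and the Bayes label of both $x$ and $\bar x$ is $0$ (property~1 and the ``ground truth unchanged'' part of property~2); Lemma~1 plus the parameter choice then gives $p(0\mid\bar x)>1-\delta$ while $q(0\mid\bar x)<\delta$, i.e. the Bayes label of $\bar x$ stays $0$ but $q$ confidently flips it to $1$ (the rest of properties~2--3). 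For property~5, $q$ is constant on the unit ball with $q(\bar x)=\tfrac1{2C_d}\big((1+\Delta)^{-d}+\lambda_2\big)$, which exceeds the $q$-density on $R^d(1,1+\Delta)$ and — for our large $d$ — on $R^d(2,3)$, and these two regions carry $q$-mass $>\tfrac12$, so $q(\bar x)$ is at least the median density. For property~6, any ball of radius $\Delta<1$ meets each of $R^d(0,1)$, $R^d(1,1+\Delta)$, $R^d(2,3)$ in a relatively vanishing volume as $d\to\infty$ (and $R^d(0,1)$ carries only $p$-mass $\tfrac12(\lambda_1+\lambda_2)\to0$ anyway), so $\sup_c p(B(c,\Delta))\to0$. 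For property~7, $p(\cdot\mid1)=q(\cdot\mid1)$ and a one-line integral gives $\int|p(x\mid0)-q(x\mid0)|\,dx=2\big(\lambda_1-(1+\Delta)^{-d}\big)\le2\lambda_1$, hence $d_{TV}(p,q)\le\lambda_1/2\to0$.

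The only place real work is needed is the parameter step: showing the displayed constraints are simultaneously satisfiable with $d$ of the claimed order — the bound on $d$ is the one genuine estimate, and it is exactly the quantitative form of the ``curse of dimensionality'' intuition in Section~\ref{sec:theory}. Everything else is bookkeeping, the one easy-to-miss subtlety being that the radial attack in property~4 needs budget strictly below $\Delta$, which is harmless because $\|x\|<1+\Delta$ holds with $p$-probability one on the outer annulus.
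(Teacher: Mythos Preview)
Your proposal follows essentially the same route as the paper: the identical parameter reduction to the window $\left(\tfrac{1-\delta}{\delta}\right)^2(1+\Delta)^{-d}<\lambda_1<\tfrac{\epsilon}{d\log(1+\Delta)}$ (via $\lambda_2=\tfrac{\delta}{1-\delta}\lambda_1$) and the same dimension estimate, with more explicit bookkeeping than the paper supplies on the $1/3$ probability, the radial attack geometry, and the median-density check for property~5. The only substantive divergence is property~7: the paper reads ``total variation of the distribution'' as a regularity condition on $p$ itself (to exclude pathological misclassification sets) and argues via the divergence theorem and the decay of sphere surface areas in high dimension, whereas you read it as $d_{TV}(p,q)$ and give a direct $L^1$ computation; your computation is correct under your reading, and the phrasing of the statement is ambiguous enough that either interpretation is defensible.
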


The last two conditions exclude degenerate trivial counter-exmaples, one where the whole distribution support is in a $\Delta$ radius ball and  $\Delta$ does indeed represent a small pertubation. The other condition excludes ``pathological" distributions ,e.g. misclassification on a dense zero measure set like the rationals.
\begin{proof}

In order to satisfy conditions 1-5, using previous lemmas, it is enough that
\begin{enumerate}
    \item $\frac{\lambda_1}{\lambda_1+\lambda_2}\geq1-\delta$
    \item $\frac{1}{1+\lambda_2(1+\Delta)^d}\leq\delta$
    \item $\lambda_1\leq 1-e^{-\epsilon}$
    \item $\lambda_1>\frac{1}{(1+\Delta)^d}$
    \item $\lambda_1<\frac{\epsilon}{d\log(1+\Delta)}$
\end{enumerate}
By setting $\lambda_2=\frac{\delta}{1-\delta}\lambda_1$ we can easily satisfy condition 1. It is not hard to see that condition 2 is equivalent to $\lambda_1\geq \left(\frac{1-\delta}{\delta}\right)^2\frac{1}{(1+\Delta)^d}$ which superseeds condition 4 when $\delta<1/2$. Condition 3 can be satisfied with $\lambda_1<\epsilon/2$ by using $1-x\geq e^{-2x}$ for $x<1/2$.\\

This boils down to ensuring $d$ is large enough so that there is a valid $\lambda_1$ such as 
\begin{equation}
    \left(\frac{1-\delta}{\delta}\right)^2\frac{1}{(1+\Delta)^d}<\lambda_1<\frac{\epsilon}{d\log(1+\Delta)}
\end{equation}

Which is true for large enough $d$ as the l.h.s decays exponentially while the r.h.s linearly. 

Condition 6 is trivial as the radius of the support is fixed so as long as $\Delta<1$ the probability in any $\Delta$ radius ball decays exponentially. Regarding total variation, we note that from the divergence theorem this can be bounded by a term that depends on the surface area of shperes with fixed radius which decreases to zero as $d$ goes to infinity.

\end{proof}
\section{PixelCNN++}\label{sec:pixelcnn}
We trained a conditional PixelCNN++ where instead of predicting each new pixel using a mixture of 10 components, we use one mixture component per class. Using reweighting we train using the following objective $-log(p(x|y))/dim+\alpha\cdot-log(p(y|x))$. As one can see from table \ref{tab:pixel}, standard trainig, i.e. $\alpha=0$, results in very poor accuracy, while reweighting the classification score results in much better accuracy but worse NLL.

\begin{table}[h]
\begin{tabular}{|l|l|l|}
\hline
$\alpha$ & acc (\%) & bits/dim \\ \hline
0        & 25.48    & 3.05     \\ \hline
1000     & 85.78    & 3.34     \\ \hline
\end{tabular}
\centering
\caption{Accuracy and NLL for pixelCNN++ on CIFAR10}
\label{tab:pixel}
\end{table}
\end{document}